\newtheorem{theorem}{Theorem}
\newtheorem{lemma}{Lemma} 
\newtheorem{assumption}{Assumption}
\newcommand{\R}{\mathbb{R}}
\newcommand{\N}{\mathbb{N}}
\newcommand{\grph}{\mathcal{G}}     % Graph (interconnection among servers)
\newcommand{\srvr}{\mathcal{S}} 
\newcommand{\clnt}[1]{\mathcal{C}_{#1}}
\newcommand{\nrm}[1]{\Vert #1 \Vert} 
\newcommand{\nrmsq}[1]{\Vert #1 \Vert^2}
\newcommand{\one}{\mathbf{1}} 
\newcommand{\zro}{\mathbf{0}}
\begin{document}

\title{Distributed Federated Learning by Alternating Periods of Training}

\author{Shamik Bhattacharyya, Rachel Kalpana Kalaimani}
        % <-this % stops a space
%\thanks{This paper was produced by the IEEE Publication Technology Group. They are in Piscataway, NJ.}% <-this % stops a space
%\thanks{Manuscript received April 19, 2021; revised August 16, 2021.}}

% The paper headers
%\markboth{January~2026}%
%{Shell \MakeLowercase{\textit{et al.}}: A Sample Article Using IEEEtran.cls for IEEE Journals}
%\IEEEpubid{0000--0000/00\$00.00~\copyright~2021 IEEE}
% Remember, if you use this you must call \IEEEpubidadjcol in the second
% column for its text to clear the IEEEpubid mark.

\maketitle

\begin{abstract}
Federated learning is a privacy-focused approach towards machine learning where models are trained on client devices with locally available data and aggregated at a central server. 
However, the dependence on a single central server is challenging in the case of a large number of clients and even poses the risk of a single point of failure. 
To address these critical limitations of scalability and fault-tolerance, we present a distributed approach to federated learning comprising multiple servers with inter-server communication capabilities. While providing a fully decentralized approach, the designed framework retains the core federated learning structure where each server is associated with a disjoint set of clients with server-client communication capabilities.
We propose a novel DFL (Distributed Federated Learning) algorithm which uses alternating periods of local training on the client data followed by global training among servers. We show that the DFL algorithm, under a suitable choice of parameters, ensures that all the servers converge to a common model value within a small tolerance of the ideal model, thus exhibiting effective integration of local and global training models. Finally, we illustrate our theoretical claims through numerical simulations. 
\end{abstract}

\begin{IEEEkeywords}
federated learning, distributed AI, distributed optimization
\end{IEEEkeywords}

\section{Introduction} \label{sec:intro}
\IEEEPARstart{T}{he} introduction of federated learning in \cite{FedL_Google_2016} opened a new avenue of machine learning where a centralized source of training data was no longer necessary. In federated learning, each client trains a local model using its own data, and all these client models are periodically aggregated by a central server into the global model. This decentralized approach focused on privacy-preserving machine learning is finding applications in improving the user experience of smartphones \cite{iot_Survey}, advancing digital health applications \cite{med_loAdaboost}, banking applications \cite{fedl_banking} and promising more in the near future.

The rise in awareness of user privacy, which has led to users being more vigilant about sharing data, has generated increasing interest in federated learning. As the sensitive user data remains with the client and is no longer needed to be shared with the server, privacy is inherently ensured. Moreover users gain more control over their data that is used to train the model on their devices \cite{mobKeybrd_hard2019}. 
From its inception, federated learning has been mostly focused towards applications over mobile edge devices as clients. Various algorithms have been proposed to address and improve upon different aspects like system heterogeneity \cite{FedLGA_Cyber_24}, communication efficiency \cite{fedl_comm_smc23}, adversarial attacks \cite{fedl_adv_cyber24} etc. Recently federated learning has been considered for the advancement of medical and healthcare applications \cite{fedL_med_nlm}. 

We focus on such applications where clients are hospitals and other medical organizations with sensitive data of medical records. Moreover, these organizations can be expected to have sufficient computation and communication resources to train on large data-sets and also communicate periodically with the server. For designing more effective models, it would be desirable to have clients spread across different regions and countries. In such scenarios having a single central server to address all clients may not be feasible due to official protocols, geo-political issues, etc. This is where allowing each region or country to have their own server addressing the local clients while having the ability to communicate among the servers of neighbouring regions can provide a collaborative solution towards the advancement of global healthcare. 
This motivates us to develop a federated learning approach considering multiple servers. Moreover, the dependence of existing federated learning approaches on a single central server may be undesirable in the case of a large number of clients and could also pose the risk of being a single point of failure \cite{FedL_openProbs_2021}.

We present here a distributed approach to federated learning using multiple servers where the servers are able to communicate among themselves. Each server has a corresponding set of clients that periodically communicate with the server. We term this as \textbf{distributed federated learning}. Towards fully decentralized federated learning, peer-to-peer learning has been considered where the communication with server is replaced with communication among the clients \cite{pr2pr_brnTrrnt_agRoy2019}. Although this approach eliminates the need for a central server, it may still need some central authority as mentioned in \cite{FedL_openProbs_2021}. 
The approach of using multiple servers has recently been studied as hierarchical federated learning \cite{hfl_Liu_icc2020},\cite{hfl_medIoT_Zhou_23}. While it does allow for local servers to address disjoint groups of clients, inter-server communication is not considered. Moreover, it still needs a central server to periodically perform the final aggregation of the models. 
To ensure that a global parameter model is estimated that best suits the data across all clients and all the servers agree over it, we design a novel \textbf{DFL (Distributed Federated Learning)} algorithm. The proposed DFL algorithm uses repeated cycles of training on the client data using a gradient based approach for a certain period followed by a consensus approach via inter-server communication among servers to arrive at a common global model for the remaining time. 
In consensus algorithms \cite{con_TnF_survey} and consensus-based distributed optimization algorithms \cite{disOpti_Nedic_tac09}, the agents continuously follow an iterative update law to arrive at the common estimate. In case of the DFL algorithm the consensus update law followed by the servers is interspersed with periods of clients following the gradient based update law and their model aggregates being incorporated by the respective servers. 
The challenge of the servers arriving at a global model estimate while periodically incorporating their corresponding client model aggregates is effectively managed by the DFL algorithm as established through its theoretical convergence. In particular, we show that the DFL algorithm, under suitable choice of parameters, ensures that all the servers converge to a model value within a small tolerance from the ideal model. 

Our main contributions in this paper are listed below. 
\begin{itemize}
\item We design a distributed approach to federated learning comprising multiple servers with the capability of communication among neighbouring servers. This addresses the critical limitations of scalability and fault-tolerance of single-server or hierarchical federated learning models. A corresponding disjoint set of clients is associated to each of these servers, where the data for training the model parameters are available locally with the clients.

\item We propose a novel algorithm, the \textbf{DFL} algorithm, designed to ensure that all servers eventually agree on a common model parameter value that will perform well across all client devices (Algorithm \ref{alg:dfl}). The novel aspect of the algorithm lies in the periodic shifting between local training across clients and the global training among servers. The intervals of clients training their models on the locally available data is interspersed with periods where the servers communicate among themselves to achieve consensus over a common acceptable global model. 

\item We establish convergence guarantees for the DFL algorithm. 
The periodic nature of the proposed algorithm along with the shifting between local and global training requires a different approach in establishing the convergence proof in comparison to the conventional consensus based distributed approaches. 
While the algorithm is based on  gradient descent, we derive the step size that ensures convergence for the DFL algorithm. We observe that this is dependent on the number of iterations that is performed on each client before the server iterations. We show that the DFL algorithm, with an appropriate choice of parameters, ensures that the prediction model across all servers is within a certain tolerance $\epsilon$ from the value of the ideal model (Theorem \ref{thm:mainThm}). 

\end{itemize} 

\noindent \textit{Notations :} $\R$ denotes the set of \emph{real} numbers, and $\R^N$ represents the $N$-dimensional Euclidean space. For any set $\mathcal{S}$, the \textit{cardinality} of the set is denoted by $|\mathcal{S}|$. $\one := (1,1,\hdots,1)$ and $\zro := (0,0,\hdots,0)$, of appropriate dimensions. For a real-valued vector $v$, $v'$ denotes the \emph{transpose} of the vector and $||v||$ denotes its $l_2$-norm. Similarly, for a real-valued matrix $V$, $V'$ denotes the \emph{transpose} of the matrix, and $||V||$ denotes its \emph{spectral norm}.  

The organization of the paper is as follows. Section-\ref{sec:ProbFrmn} discusses the details of the problem which we refer to as distributed federated learning. Section-\ref{sec:Res} starts with a discussion on the details of the proposed DFL algorithm, followed by some intermediate results, which are then used to finally present our main result. A numerical simulation is presented in Section-\ref{sec:NumRes} to validate the performance of the DFL algorithm. Finally, the conclusions are presented in Section-\ref{sec:Con}.

%% ~~~~~~~~~~~~~~~  Problem Formulation  ~~~~~~~~~~~~~~~ %%
\section{Problem Formulation} \label{sec:ProbFrmn}
%% ~~~~~~~~~~~~~~~  PF:System Model  ~~~~~~~~~~~~~~~ %%
\subsection{System Model} \label{sec:PrbFrmn_SysMdl}
We consider a \textit{distributed} federated learning architecture consisting of $M$ servers, represented by the set $\mathcal{S}$. Each server $i$ has a corresponding set of $N$ clients, $\mathcal{C}_i$ which periodically communicate with the server. Moreover, each server can communicate with its neighbouring servers, and this communication is represented by an undirected graph $\mathcal{G}:(\mathcal{V},\mathcal{E})$. Here $\mathcal{V}$ denotes the set of vertices of the graph, representing the servers, and $\mathcal{E}$ denotes the edges of the graph, representing the bidirectional communication links among pairs of neighbouring servers. So, $\mathcal{V} = \mathcal{S}$, and $|\mathcal{V}| = M$. We use the following standard assumption on the graph $\mathcal{G}$ which helps in ensuring all the servers achieve consensus.  
\begin{assumption} \label{asmpn:grphCnnctd}
    The graph $\grph$ is connected.
\end{assumption} 
\begin{figure}
    \centering
    \includegraphics[scale=0.2]{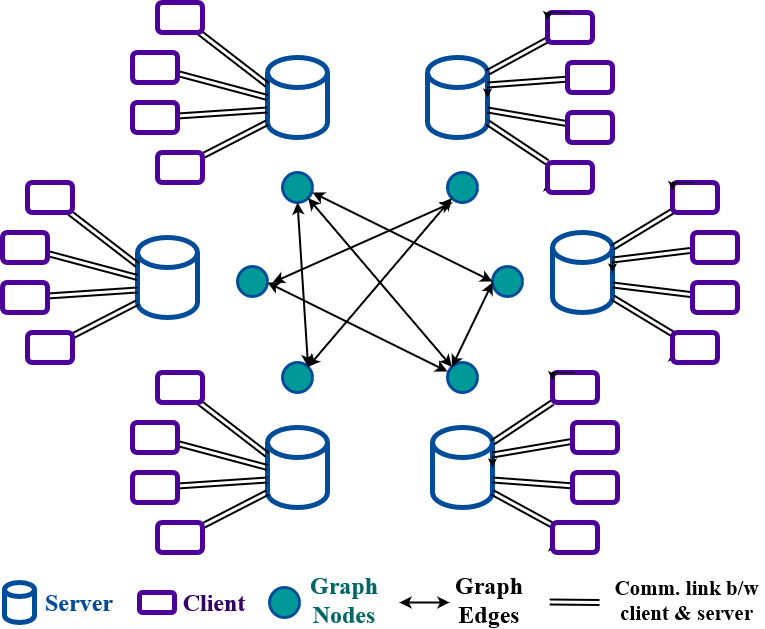}
    \caption{System model example with M=6 and N=4}
    \label{fig:sysModel}
\end{figure} 

We present a sample system model in Fig.\ref{fig:sysModel} comprised of 6 servers and 4 clients per server. The double-line link between the server and its corresponding clients represents the periodic communication between them to share their updated model parameters. The graph at the centre represents the communication among the servers - nodes of the graph symbolize the servers, while the edges indicate the communication links between the servers. 
%% ~~~~~~~~~~~~~~~  PF:Distributed FedL  ~~~~~~~~~~~~~~~ %%
\subsection{Distributed Federated Learning} \label{sec:PrbFrmn_dFedL}
Federated learning is an approach to training a machine learning model at a central server using the data that is locally available across multiple clients. The main idea is to ensure privacy by not requiring to move the corresponding data out of the client devices. Here we introduce the idea of \textit{distributed} federated learning, where instead of a single central server, we have multiple servers, each associated with a set of clients.  

For any server $i$, each client $j$ associated to the server has its corresponding set of $D$ data points, $\mathcal{D}^{ij} = \{ (x_1,y_1),(x_2,y_2), \hdots, (x_D,y_D) \}$, where $x_k \in \R^d$ and $y_k \in \R$ for all $k = 1, \hdots, D$.  The empirical risk in prediction using the model parameter $w \in \R^d$, over the locally available data set $\mathcal{D}^{ij}$ of client $j$, is given by $f^{ij}(w) := 1/D \sum_{k=1}^D l(w;(x_k,y_k)) $. Here $l(.)$ is the predefined loss function across all clients. So the net empirical risk associated with any server $i$ is given by $f^i (w) := 1/N \sum_{j=1}^N f^{ij}(w)$. The goal then is to find a suitable prediction model parameter $w$ that will perform well on all client devices across all servers. This goal is essentially the solution of the following distributed optimization model : 
\begin{equation} \label{eq:prblmStmnt}
    \min_w f(w) \triangleq \frac{1}{M} \sum_{i=1}^{M} f^i(w) \triangleq \frac{1}{M} \sum_{i=1}^{M} \frac{1}{N} \sum_{i=1}^{N} f^{ij}(w)
\end{equation}

We introduce the following assumptions on the empirical risk functions associated with the clients. These assumptions on the objective function are commonly used in the literature of federated learning \cite{AMitra_NeurIPS21} and distributed optimization \cite{DistrOpti_Nedic_SIAM}. 
\begin{assumption} \label{asmpn:funcn_fij}
    The risk functions $f^{ij}(.)$ are $\mu$-strongly convex and $L$-smooth, for all $j \in \clnt{i}$, $\forall i \in \srvr$.
\end{assumption} 
\begin{assumption} \label{asmpn:gradBound}
    The gradient of the risk functions across all clients is bounded, i.e 
    \begin{equation} \label{eq:gradBound}
        \nrm{\nabla f^{ij}(w)} \leq \theta \text{ , for all } j \in \clnt{i}, \forall i \in \srvr
    \end{equation}
\end{assumption}

%% ~~~~~~~~~~~~~~~  Results  ~~~~~~~~~~~~~~~ %%
\section{Results} \label{sec:Res}
%% ~~~~~~~~~~~~~~~  R:Algorithm  ~~~~~~~~~~~~~~~ %%
\subsection{Algorithm} \label{sec:Res_Algo} 
We present our proposed \textit{DFL} algorithm designed to find a prediction model parameter $w$ by solving the optimisation problem in \eqref{eq:prblmStmnt}.
\begin{figure*}
    \centering
    \includegraphics[width=0.8\textwidth,height=5cm]{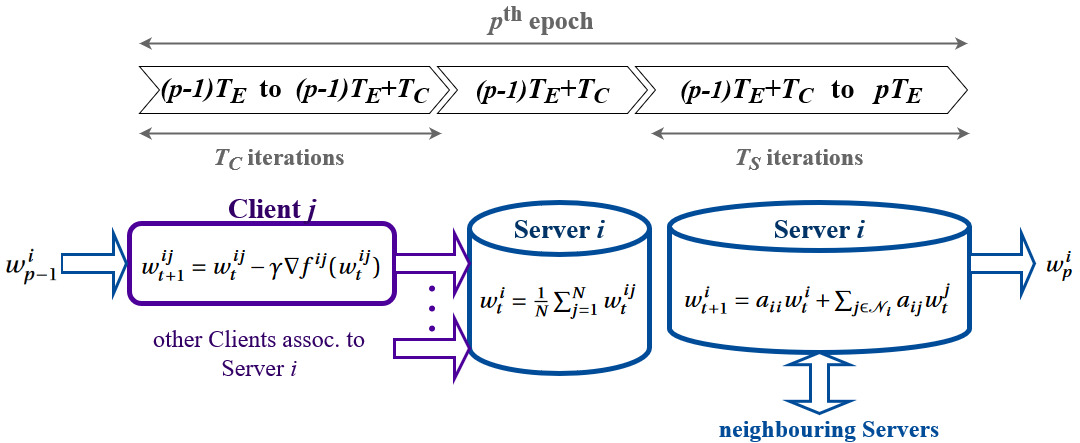}
    \caption{Timeline representation of client and server iterations over 1 epoch of the DFL algorithm}
    \label{fig:alg_1epoch}
\end{figure*}

First, we specify one epoch to be of $T_E \in \N$ time steps or iterations, which consists of $T_C$ iterations of client computation followed by $T_S$ iterations of server computations. Thus, $T_C + T_S = T_E$. The DFL algorithm consists of two main parts of computations: the client side followed by the server side, repeated over every epoch. 

Now consider any $p$-th epoch, $p \in \N$. Firstly, for each server $i \in \srvr$, every client $j \in \clnt{i}$ maintains its own local model parameter $w^{ij}_t$, and updates it using the following : 
\begin{equation} \label{eq:UL_client}
    w^{ij}_{t+1} = w^{ij}_t - \gamma \nabla f^{ij}(w^{ij}_t) %\text{ , }  
\end{equation}
where $(p-1)T_E \leq t < (p-1)T_E + T_C$.
Here we consider that the clients use a common constant step-size parameter, $\gamma$. This client side computation is performed in parallel across all clients $\bigcup_{i=1}^M \clnt{i}$ for $T_C$ iterations. After that, every client communicates its latest updated model parameter value to its corresponding server. Each server $i \in \srvr$ then updates its own model parameter $w^i_t$ by taking an average of all the values received from its clients as :
\begin{equation} \label{eq:UL_clntSrvr}
    w^{i}_{t} = \frac{1}{N}\sum_{j=1}^N w^{ij}_{t} %\text{ , } 
\end{equation}
where $t = (p-1)T_E + T_C$.
With the value from \eqref{eq:UL_clntSrvr} as the initial value, each server $i$ updates its model parameter by using the following update law :
\begin{equation} \label{eq:UL_server}
    w^{i}_{t+1} = a_{ii} w^i_t + \sum_{j \in \mathcal{N}_i} a_{ij} w^{j}_t 
\end{equation}
where $(p-1)T_E + T_C \leq t < pT_E$, and $\mathcal{N}_i$ represents the neighbors of server i. The scalars $a_{ij} \in \R$ are weights assigned by the server $i$ to its own and neighbours' values, such that it obeys the following properties with $0< \alpha < 1$ : 
\begin{equation} \label{eq:aij_wtsProp} 
    a_{ij} \begin{cases} 
    > \alpha &\text{ if } j \in \mathcal{N}_i \cup \{i\}, \\ 
    = 0 &\text{ otherwise} 
    \end{cases} ; 
    \sum_{j=1}^M a_{ij} = 1 ; \sum_{i=1}^M a_{ij} = 1
\end{equation}
All the $M$ servers perform the computation of \eqref{eq:UL_server} in parallel for $T_S$ iterations. With this, after a total of $T_E$ iterations consisting of both client and server side computations, the $p$-th epoch concludes. Finally, each server $i$ communicates its latest model parameter update at the end of $p$-th epoch, $w^i_p$ to all its clients $\clnt{i}$. All these details of the DFL algorithm is summarised in a pseudo-code format in Algorithm \ref{alg:dfl}.  
Alongside it a pictorial timeline representation of the iterations in any $p$th epoch is shown in Fig.\ref{fig:alg_1epoch}. 

 \begin{algorithm} 
\caption{DFL : Distributed Federated Learning} 
\label{alg:dfl}
 \textbf{Given} : $M$ servers, $N$ clients/server, graph $\mathcal{G}$, $T_C$ client iterations, $T_S$ server iterations, parameters $\mu, L, \gamma$, $\theta$ 
\\ \textbf{Initialize} : $w_0 \in \R^d$, shared across all servers and clients
\\ \textbf{for} $p = 1,2,\hdots $ \textbf{do} 
\\ \hspace*{0.3cm} \textbf{parallel for all servers $i \in \srvr$ do} 
\\ \hspace*{0.6cm} \textbf{parallel for all clients $j \in \clnt{i}$ do} 
\\ \hspace*{0.9cm} \textbf{for} $t = (p-1)T_E : (p-1)T_E+T_C$ \textbf{do} 
\\         \hspace*{1.2cm} Client computes : $w^{ij}_{t+1} = w^{ij}_t - \gamma \nabla f^{ij}(w^{ij}_t)$
\\ \hspace*{0.9cm} \textbf{end for}
\\ \hspace*{0.9cm} Client communicates : sends $w^{ij}_t$ to server $i$
\\ \hspace*{0.6cm} \textbf{end parallel for} 
    \\ \hspace*{0.6cm}    Server computes : $w^{i}_{t} = \frac{1}{N}\sum_{j=1}^N w^{ij}_{t}$
\\ \hspace*{0.6cm} \textbf{for} $t = (p-1)T_E+T_C : pT_E$ \textbf{do} 
\\ \hspace*{0.9cm} Server communicates : sends $w^i_t$ to neighbors $\mathcal{N}_i$
        \\ \hspace*{0.9cm} Server computes :  $w^{i}_{t+1} = a_{ii} w^{i}_t + \sum_{j \in \mathcal{N}_i} a_{ij} w^{j}_t$
\\ \hspace*{0.6cm} \textbf{end for}
\\ \hspace*{0.6cm} Server communicates : sends $w^i_t$ to clients $\mathcal{C}_i$
\\ \hspace*{0.3cm} \textbf{end parallel for}
\\ \textbf{end for}
\\ \textbf{Output : } $w^i_p$ for all $i \in \mathcal{S}$ 
\end{algorithm}

%% ~~~~~~~~~~~~~~~  R:Intermediate Results  ~~~~~~~~~~~~~~~ %%
\subsection{Intermediate Results} \label{sec:Res_InterRes}
Here we present some intermediate results related to the DFL algorithm that help us to finally establish our main result in Section \ref{sec:Res_MnRes}. 
First we define two matrices as : $W_t \in \R^{M \times d}$, $W_t := [(w^1_{t})' ; (w^2_{t})'; \hdots ; (w^M_{t})'], t \in \N$, and $A \in \R^{M \times M}$, where the $(i,j)$th element of $A$ is $a_{ij}$ from \eqref{eq:aij_wtsProp}. Now we rewrite the update law in \eqref{eq:UL_server} considering all the servers as :
\begin{equation} \label{eq:UL_server_matForm}
    W_{t+1} = A W_t .
\end{equation} 
The following lemma establishes that after any given epoch $p$, the distance between the model parameter estimate of any server $i$, $w^i_p$, and the common average model parameter value across all servers, $\Bar{w}_p$, is always bounded. It also shows that this bound decreases with increasing number of epochs. 

%% ~~~~~~~~~~~~~~~  R_IR:Lemma 1  ~~~~~~~~~~~~~~~ %%
\begin{lemma} \label{lem:srvrAvgBound}
   Suppose Assumptions \ref{asmpn:grphCnnctd} and \ref{asmpn:gradBound} hold. Then the DFL algorithm ensures that the difference between the model parameter estimate of any server $i$, $w^i_{p}$ and the global average model parameter estimate across all servers, $\Bar{w}_{p}$ is bounded for every epoch $p$.
   Specifically, for all $i \in \srvr$, $p \in \N$
    \begin{equation} \label{eq:srvr4mAvgBound}
        \nrm{w^i_{p} - \Bar{w}_{p}} \leq \sigma_A^p \nrm{W_{0} - \one \Bar{w}'_0} + \sqrt{M} T_C \theta \gamma \sigma_A / (1 - \sigma_A)
    \end{equation} 
    where $\sigma_A = \nrm{A^{T_S} - \frac{1}{M} \one \one'}$ and $\gamma$ is as in \eqref{eq:UL_client}. 
\end{lemma}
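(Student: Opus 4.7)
The plan is to track the per-epoch dynamics in matrix form and exploit the doubly stochastic structure of $A$ to derive a simple contraction recurrence.

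First I would consolidate one epoch of the DFL algorithm into a single matrix update. Stacking all server iterates at the start of epoch $p$ into $W_{p-1} \in \R^{M\times d}$, the client-side gradient descent followed by server averaging in \eqref{eq:UL_clntSrvr} can be rewritten as
\begin{equation*}
    w^i_{(p-1)T_E+T_C} = w^i_{(p-1)T_E} - \frac{\gamma}{N} \sum_{j=1}^N \sum_{k=0}^{T_C-1} \nabla f^{ij}(w^{ij}_{(p-1)T_E+k}),
\end{equation*}
because every client $j\in\clnt{i}$ is initialised from the same $w^i_{(p-1)T_E}$. Denoting the $i$-th row of an auxiliary matrix $G_p \in \R^{M\times d}$ by this aggregated gradient term and using \eqref{eq:UL_server_matForm} iteratively across the $T_S$ server steps, one obtains the compact recursion
\begin{equation*}
    W_p = A^{T_S}\bigl( W_{p-1} - G_p \bigr).
\end{equation*}
By Assumption \ref{asmpn:gradBound} each row of $G_p$ has norm at most $T_C \theta \gamma$, so $\nrm{G_p} \leq \nrm{G_p}_F \leq \sqrt{M}\,T_C\theta\gamma$.

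Next I would use that $A$ is doubly stochastic (as ensured by \eqref{eq:aij_wtsProp}) to compute $\bar{w}_p$. Since $\one' A = \one'$, averaging rows of the recursion gives $\bar{w}_p = \bar{w}_{p-1} - \frac{1}{M}\one' G_p$, hence
\begin{equation*}
    W_p - \one \bar{w}_p' = A^{T_S}(W_{p-1}-G_p) - \tfrac{1}{M}\one\one'(W_{p-1}-G_p) = \Bbc{A^{T_S}-\tfrac{1}{M}\one\one'}\bbc{W_{p-1}-G_p}.
\end{equation*}
Because $A^{T_S}\one = \one$ and $\tfrac{1}{M}\one\one'\one = \one$, the matrix $A^{T_S}-\tfrac{1}{M}\one\one'$ annihilates $\one \bar{w}_{p-1}'$, so I can equivalently write the right-hand side as $(A^{T_S}-\tfrac{1}{M}\one\one')(W_{p-1}-\one\bar{w}_{p-1}' - G_p)$. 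Taking the spectral norm and invoking the definition $\sigma_A = \nrm{A^{T_S}-\tfrac{1}{M}\one\one'}$ yields the one-epoch contraction
\begin{equation*}
    \nrm{W_p - \one\bar{w}_p'} \leq \sigma_A \Bbc{\nrm{W_{p-1}-\one\bar{w}_{p-1}'} + \sqrt{M}\,T_C\theta\gamma}.
\end{equation*}

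Finally I would unroll this recursion from $p$ down to $0$, which produces
\begin{equation*}
    \nrm{W_p - \one\bar{w}_p'} \leq \sigma_A^p \nrm{W_0-\one\bar{w}_0'} + \sqrt{M}\,T_C\theta\gamma \sum_{k=1}^{p} \sigma_A^k,
\end{equation*}
and bound the geometric sum by $\sigma_A/(1-\sigma_A)$ (valid because connectedness in Assumption \ref{asmpn:grphCnnctd} together with \eqref{eq:aij_wtsProp} makes $A$ primitive and doubly stochastic, so $\sigma_A < 1$). The stated bound follows upon noting that $\nrm{w^i_p - \bar{w}_p}$ is the $\ell_2$-norm of a single row of $W_p - \one\bar{w}_p'$, hence no larger than the spectral norm of the matrix.

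The only delicate step is the second one, where the freedom to substitute $W_{p-1}$ by $W_{p-1}-\one\bar{w}_{p-1}'$ inside the contraction must be justified by the kernel property $(A^{T_S}-\tfrac{1}{M}\one\one')\one = \zro$; everything else is a straightforward induction plus a geometric sum, and the gradient bound in Assumption \ref{asmpn:gradBound} handles the perturbation $G_p$ cleanly since the $T_C$ client steps do not accumulate more than $T_C\theta\gamma$ per client.
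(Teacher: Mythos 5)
Your proposal is correct and follows essentially the same route as the paper's proof: consolidate one epoch into $W_{p+1}=A^{T_S}(W_p-\gamma G_p)$, project out the row average using the double stochasticity of $A$, contract by $\sigma_A$, bound $\nrm{G_p}$ via its rows and Assumption \ref{asmpn:gradBound}, and sum the geometric series. The only (harmless) differences are that you absorb $\gamma$ into $G_p$ and bound each row by a direct triangle inequality where the paper uses a Jensen-type argument on squared norms, and your justification that $\sigma_A<1$ (via primitivity from connectedness and the positive weights) is in fact slightly more careful than the paper's.
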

\begin{proof}
Consider any $p+1$-th epoch, $p \in \N$. For the first $T_C$ iterations, from $pT_E + 1$ to $pT_E + T_C$, the clients across all servers perform the gradient descent step in parallel. So for any client $j$ associated to some server $i$ we have : 
\begin{align}
     \label{eq:Lem1_1} w^{ij}_{pT_E+T_C} &= w^{ij}_{pT_E} - \gamma \sum_{\tau = p T_E}^{pT_E+T_C-1} \nabla f^{ij}(w^{ij}_{\tau})
\end{align}
At the the end of $p$-th epoch, server $i$ communicates its latest model parameter value $w^i_p$ to all its corresponding clients. So at the starting of the $p+1$-th epoch, the initial model parameter of client $j$ is $w^{ij}_{pT_E} = w^{ij}_{p} = w^{i}_{p}$. Using this in \eqref{eq:Lem1_1} we get
\begin{equation} \label{eq:clnt_aftrTc}
    w^{ij}_{pT_E+T_C} = w^{i}_{p} - \gamma \sum_{\tau = p T_E}^{pT_E+T_C-1} \nabla f^{ij}(w^{ij}_{\tau}).
\end{equation}
After $T_C$ iteration, the model parameter estimate of the client parameters is communicated to the corresponding servers, where each server updates its own model parameter estimate by taking an average of its clients' estimates.
\begin{align}   
   \nonumber w^{i}_{pT_E + T_C} &= \frac{1}{N}\sum_{j=1}^N w^{ij}_{pT_E + T_C} \\ 
   \nonumber  &= w^i_p - \frac{\gamma}{N}\sum_{j=1}^N \sum_{\tau = p T_E}^{pT_E+T_C-1} \nabla f^{ij}(w^{ij}_{\tau}) \\ 
    \therefore w^{i}_{pT_E + T_C}  &= w^i_p - \gamma g^i_p      \label{eq:srvr_aftrTc}
\end{align} 
where $g^i_p := (1/N)\sum_{j=1}^N \sum_{\tau = p T_E}^{pT_E+T_C-1} \nabla f^{ij}(w^{ij}_{\tau})$. 

\vspace{0.2cm}
Let $G_t := [(g^1_{t})' ; (g^2_{t})'; \hdots ; (g^M_{t})']$. Then from \eqref{eq:srvr_aftrTc}, considering all the servers we can say 
\begin{equation}
    W_{pT_E+T_C} = W_{p} - \gamma G_{p} ,
\end{equation}
For $T_S$ iterations, from $pT_E + T_C + 1$ to $pT_E + T_C + T_S$, the servers perform the consensus update in \eqref{eq:UL_server_matForm} which can be represented as : 
\begin{align}
    \nonumber W_{pT_E + T_C + T_S} &= A W_{pT_E + T_C + T_S - 1} = \hdots = A^{T_S} W_{pT_E + T_C}\\ 
    \label{eq:srvrs_Con_p_p1} \therefore W_{p+1} &= A^{T_S} (W_{p} - \gamma G_{p})
\end{align}
Then the difference of the servers' model parameter estimates from the common average across all servers, using \eqref{eq:srvrs_Con_p_p1}, can be written as : 
\begin{align}
    \nonumber W_{p+1} &- \one \Bar{w}'_{p+1} = W_{p+1} - \one (\frac{1}{M} \one' W_{p+1}) \\ 
    \nonumber &= (I - \frac{1}{M} \one \one') A^{T_S} (W_{p} - \gamma G_{p})  \\ 
    \label{eq:diff_srvrMat4mAvg} &= (A^{T_S} - \frac{1}{M} \one \one') (W_{p} - \one \Bar{w}'_p) - (A^{T_S} - \frac{1}{M} \one \one') \gamma G_{p} 
\end{align} 
where for the last step we use $\one' A^{T_S} = \one'$. Now applying the spectral norm to both sides of \eqref{eq:diff_srvrMat4mAvg} and using its sub-multiplicative property we get
\begin{equation*}  
    \nrm{W_{p+1} - \one \Bar{w}'_{p+1}} \leq \sigma_A \nrm{W_{p} - \one \Bar{w}'_p} + \sigma_A \gamma \nrm{G_{p}}.  
\end{equation*}
 where $\sigma_A := \nrm{A^{T_S} - \frac{1}{M} \one \one'}$.
\begin{equation} \label{eq:diffNrm_srvrMat4mAvg}
    \therefore \nrm{W_{p} - \one \Bar{w}'_{p}} \leq \sigma_A^p \nrm{W_{0} - \one \Bar{w}'_0} +  \gamma \sum_{l=0}^{p-1} \sigma_A^{p-l} \nrm{G_{l}} ,
\end{equation} 
Now we proceed to first derive a bound for each row of $G_p$ for any epoch $p$, and then use it to get a bound for $\nrm{G_p}$. 
\begin{align}
    \nonumber \nrm{g^i_p}^2 &= \nrm{(1/N)\sum_{j=1}^N \sum_{\tau = p T_E}^{pT_E+T_C-1} \nabla f^{ij}(w^{ij}_{\tau})}^2 \\ 
     \nonumber &\overset{(a)}{\leq} (1/N)\sum_{j=1}^N \nrm{ \sum_{\tau = p T_E}^{pT_E+T_C-1} \nabla f^{ij}(w^{ij}_{\tau})}^2 \\ 
     \nonumber &\overset{(b)}{\leq} (1/N)\sum_{j=1}^N T_C \sum_{\tau = p T_E}^{pT_E+T_C-1} \nrm{ \nabla f^{ij}(w^{ij}_{\tau})}^2 \\ 
     \nonumber &\overset{(c)}{\leq} (1/N)\sum_{j=1}^N T_C \sum_{\tau = p T_E}^{pT_E+T_C-1} \theta^2 \\
    \label{eq:gip_sqnrm_bnd} \therefore \nrm{g^i_p}^2 &\leq T_C^2 \theta^2
\end{align} 
where $(a), (b)$ follow from the convexity of the square norm, and $(c)$ uses \eqref{eq:gradBound} from Assumption \ref{asmpn:gradBound}. Using \eqref{eq:gip_sqnrm_bnd} we get
\begin{align} \label{eq:GpMatrx_Norm}
    \nrm{G_p} \leq ||G_p||_F = \sqrt{\sum_{i=1}^M \nrm{g^i_p}^2} \leq \sqrt{\sum_{i=1}^M T_C^2 \theta^2} = \sqrt{M} T_C \theta
\end{align}
Using \eqref{eq:GpMatrx_Norm} in \eqref{eq:diffNrm_srvrMat4mAvg} we get
\begin{align}
    \label{eq:diffNrm_allSrvrs4mAvg} \nrm{W_{p} - \one \Bar{w}'_{p}} &\leq \sigma_A^p \nrm{W_{0} - \one \Bar{w}'_0} + \sqrt{M} T_C \theta \gamma \sum_{l=0}^{p-1} \sigma_A^{p-l} 
\end{align} 
As $A$ is a doubly stochastic matrix with non-negative entries, we have $\sigma_A < 1$. Using this in \eqref{eq:diffNrm_allSrvrs4mAvg} provides the required result \eqref{eq:srvr4mAvgBound}.
\end{proof}
The next result is inspired from \cite[Lemma 6]{AMitra_NeurIPS21}, which is then used to establish the lemmas that follow. 
%% ~~~~~~~~~~~~~~~  R_IR:Lemma 2  ~~~~~~~~~~~~~~~ %%
\begin{lemma} \label{lem:wvDiffSq}
    Suppose $f(.)$ satisfies Assumption \ref{asmpn:funcn_fij}. Then, for any $0 \leq \eta\leq 1/L$, and any two points $v, w \in \R^d$, we have 
    \begin{equation} \label{eq:wvDiff}
        \nrm{w - v - \eta(\nabla f(w) - \nabla f(v))} \leq \lambda \nrm{w - v}
    \end{equation}
   where $\lambda = \sqrt{1 - \eta \mu}$.
\end{lemma}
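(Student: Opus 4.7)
The plan is to prove the squared form of the inequality, namely $\nrm{w - v - \eta(\nabla f(w) - \nabla f(v))}^2 \le (1 - \eta\mu)\nrm{w - v}^2$, and then take square roots. Writing $h := w - v$ and $g := \nabla f(w) - \nabla f(v)$ for brevity, expanding the inner product gives $\nrm{h - \eta g}^2 = \nrm{h}^2 - 2\eta \langle h, g\rangle + \eta^2 \nrm{g}^2$. After this expansion, the task reduces to establishing the single scalar inequality $2\langle h, g\rangle \ge \mu \nrm{h}^2 + \eta \nrm{g}^2$, since substituting it in immediately produces the desired squared bound.

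To lower-bound $\langle h, g\rangle$ I would combine two standard consequences of Assumption \ref{asmpn:funcn_fij}: (i) $\mu$-strong convexity of $f$, giving $\langle h, g\rangle \ge \mu\nrm{h}^2$, and (ii) co-coercivity of the gradient of a convex $L$-smooth function, giving $\langle h, g\rangle \ge (1/L)\nrm{g}^2$. Adding these yields $2\langle h, g\rangle \ge \mu\nrm{h}^2 + (1/L)\nrm{g}^2$, and the hypothesis $\eta \le 1/L$ lets me replace $1/L$ by $\eta$ on the right. Substituting back produces $\nrm{h - \eta g}^2 \le (1 - \eta\mu)\nrm{h}^2 + \eta(\eta - 1/L)\nrm{g}^2$, whose last term is non-positive under the step-size assumption and so can be discarded. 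Taking square roots then recovers \eqref{eq:wvDiff} with $\lambda = \sqrt{1-\eta\mu}$.

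I do not expect a real obstacle here; this is the textbook one-step contraction estimate for gradient descent on a strongly convex, smooth function. The one subtlety worth flagging is the choice of inequalities: using strong convexity together with only the Lipschitz bound $\nrm{g} \le L\nrm{h}$ yields the weaker factor $1 - 2\eta\mu + \eta^2 L^2$, which collapses to $1-\eta\mu$ only under the much stricter step-size restriction $\eta \le \mu/L^2$. It is specifically the pairing of strong convexity with gradient co-coercivity, under the milder assumption $\eta \le 1/L$, that absorbs the quadratic term cleanly and delivers the contraction factor $\sqrt{1-\eta\mu}$ claimed in the lemma.
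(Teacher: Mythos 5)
Your proof is correct and rests on the same two ingredients as the paper's: co-coercivity of the gradient of a convex $L$-smooth function and strong monotonicity from $\mu$-strong convexity, combined under $\eta \leq 1/L$ after expanding the squared norm. The only difference is bookkeeping — you add the two lower bounds on $\langle w-v, \nabla f(w)-\nabla f(v)\rangle$ directly, while the paper first substitutes co-coercivity into the $\eta^2\nrm{\nabla f(w)-\nabla f(v)}^2$ term and then applies strong convexity to the combined inner-product term — so this is essentially the paper's argument.
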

\begin{proof}
    For any $v, w \in \R^d$ : 
    \begin{align}
        \nonumber \nrmsq{w &- v - \eta(\nabla f(w) - \nabla f(v))} \\ 
        \nonumber &= \nrmsq{w - v} + \eta^2 \nrmsq{\nabla f(w) - \nabla f(v)} \\ 
        \nonumber &\hspace{0.5cm} - 2\eta \langle w - v, \nabla f(w) - \nabla f(v) \rangle \\ 
        \nonumber &\overset{(a)}{\leq} \nrmsq{w - v} + \eta^2 L \langle w - v, \nabla f(w) - \nabla f(v) \rangle \\ 
        \nonumber &\hspace{0.5cm} - 2\eta \langle w - v, \nabla f(w) - \nabla f(v) \rangle \\ 
        \nonumber &= \nrmsq{w - v} - \eta(2 - \eta L) \langle w-v, \nabla f(w) - \nabla f(v) \rangle \\ 
        \label{eq:lem_wvDiffSq} &\overset{(b)}{\leq} (1 - \eta \mu (2 - \eta L)) \nrmsq{w - v}
    \end{align} 
       where $(a)$ follows from $L$-smoothness and convexity of $f(.)$, and $(b)$ follows from $\mu$-strong convexity of f(.).
    As $\eta \leq 1/L$, we have $1 - \eta \mu (2 - \eta L) \leq 1 - \eta \mu $. Using this in \eqref{eq:lem_wvDiffSq} with $\lambda := \sqrt{1 - \eta \mu}$, we get \eqref{eq:wvDiff}.
\end{proof}

Next we present the following lemma which establishes a bound on how far any client's model parameter value can deviate from its corresponding server's model, within an epoch. This bound is then used to establish the result of the next lemma.
%% ~~~~~~~~~~~~~~~  R_IR:Lemma 3  ~~~~~~~~~~~~~~~ %%
\begin{lemma} \label{lem:clnt4mSrvr}
    The difference of any of the clients' model parameter value from its corresponding server's model parameter, within an epoch, is bounded. Specifically, for any $p \in \N$ and $s \in \{ pT_E + 1, pT_E + 2, \hdots, pT_E + T_C \}$,
    \begin{equation} \label{eq:clnt4mSrvrBound}
        \nrm{w^{ij}_{s} - w^i_p } \leq \gamma T_C \theta.
    \end{equation}
\end{lemma}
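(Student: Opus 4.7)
The plan is to prove this by directly unrolling the client-side gradient descent recursion \eqref{eq:UL_client}. The key observation, already exploited in the proof of Lemma \ref{lem:srvrAvgBound}, is that at the very start of the $(p+1)$-th epoch (i.e., at time $pT_E$) every client $j$ associated with server $i$ is reinitialized to the server's value, so that $w^{ij}_{pT_E} = w^i_p$. This is the anchoring identity that will make the bound cleanly relative to $w^i_p$.

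First, I would write out the recursion \eqref{eq:UL_client} telescopically from $\tau = pT_E$ up to $\tau = s - 1$ for an arbitrary $s \in \{pT_E + 1, \dots, pT_E + T_C\}$, exactly as was done in \eqref{eq:Lem1_1} but stopping at $s$ instead of $pT_E + T_C$. This yields
\begin{equation*}
w^{ij}_{s} = w^{ij}_{pT_E} - \gamma \sum_{\tau = pT_E}^{s-1} \nabla f^{ij}(w^{ij}_{\tau}) = w^i_p - \gamma \sum_{\tau = pT_E}^{s-1} \nabla f^{ij}(w^{ij}_{\tau}),
\end{equation*}
where the second equality uses the reinitialization $w^{ij}_{pT_E} = w^i_p$.

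Next, I would rearrange to isolate $w^{ij}_s - w^i_p$, take the $l_2$-norm, apply the triangle inequality across the sum, and then invoke Assumption \ref{asmpn:gradBound} (equation \eqref{eq:gradBound}) to bound each $\nrm{\nabla f^{ij}(w^{ij}_\tau)}$ by $\theta$. This yields
\begin{equation*}
\nrm{w^{ij}_s - w^i_p} \leq \gamma \sum_{\tau = pT_E}^{s-1} \nrm{\nabla f^{ij}(w^{ij}_\tau)} \leq \gamma (s - pT_E)\, \theta.
\end{equation*}
Finally, since $s \leq pT_E + T_C$, we have $s - pT_E \leq T_C$, and the claimed bound \eqref{eq:clnt4mSrvrBound} follows immediately.

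There is essentially no hard step here: the proof is a straightforward telescoping of gradient steps combined with the uniform gradient bound. The only point that requires care is to correctly reuse the epoch-boundary reinitialization $w^{ij}_{pT_E} = w^i_p$ (which follows from the server-to-client communication at the end of the $p$-th epoch, as noted in the proof of Lemma \ref{lem:srvrAvgBound}); without this identity the bound would be relative to $w^{ij}_{pT_E}$ rather than $w^i_p$, and the statement would lose the server-to-client interpretation it needs for downstream use.
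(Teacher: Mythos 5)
Your proof is correct, but it takes a genuinely different and more elementary route than the paper. The paper proves this lemma by adding and subtracting $\gamma \nabla f^{ij}(w^i_p)$ inside the norm, invoking the contraction estimate of Lemma \ref{lem:wvDiffSq} (which requires $\gamma \le 1/L$ together with the $\mu$-strong convexity and $L$-smoothness of Assumption \ref{asmpn:funcn_fij}) to obtain a recursion of the form $\nrm{w^{ij}_{s+1}-w^i_p} \le \lambda \nrm{w^{ij}_s - w^i_p} + \gamma\nrm{\nabla f^{ij}(w^i_p)}$, and then unrolling it; since the geometric factors $\lambda^l$ are simply bounded by $1$ at the end, the paper lands on the same bound $\gamma T_C \theta$. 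You instead telescope the raw gradient-descent updates from the epoch boundary, use the reinitialization $w^{ij}_{pT_E} = w^i_p$, and apply the triangle inequality with the uniform gradient bound of Assumption \ref{asmpn:gradBound} — exactly the computation already used for \eqref{eq:clnt_aftrTc} and \eqref{eq:gip_sqnrm_bnd} in Lemma \ref{lem:srvrAvgBound}. Your argument needs only Assumption \ref{asmpn:gradBound} and no restriction on $\gamma$, so it is strictly lighter on hypotheses; what the paper's contraction route buys in principle is the sharper geometric sum $\gamma\theta\sum_{l}\lambda^l \le \gamma\theta/(1-\lambda)$, a $T_C$-independent bound, but since the paper discards this by bounding $\lambda^l \le 1$, the two proofs deliver identical conclusions and yours is the simpler of the two.
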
 
\begin{proof}
For any given $p \in \N$, consider any $s \in \{ pT_E + 1, pT_E + 2, \hdots, pT_E + T_C \}$ :
\begin{align}
    \nonumber \nrm{w^{ij}_{s + 1} &- w^i_p } = \nrm{w^{ij}_{s} - w^i_p - \gamma \nabla f^{ij}(w^{ij}_{s}) } \\ 
        \nonumber &\leq \nrm{w^{ij}_{s} - w^i_p - \gamma (\nabla f^{ij}(w^{ij}_{s}) - \nabla f^{ij}(w^{i}_{p})) } \\ 
        \nonumber &  + \gamma \nrm{\nabla f^{ij}(w^{i}_{p})) } \\ 
        \nonumber &\overset{(a)}{\leq} \lambda \nrm{w^{ij}_{s} - w^i_p} + \gamma \nrm{\nabla f^{ij}(w^{i}_{p}))} \\ 
        \therefore \nrm{w^{ij}_{s} &- w^i_p } \leq \lambda^s \nrm{w^{ij}_{p} - w^i_p} + \gamma \sum_{l=0}^{s-1} \lambda^l \nrm{\nabla f^{ij}(w^{i}_{p}))}
\end{align} 
where $(a)$ follows using \eqref{eq:wvDiff}.
Applying Assumption \ref{asmpn:gradBound}, and using the facts that $\lambda < 1$ and  $w^{ij}_{p} = w^{i}_{p}$, we get \eqref{eq:clnt4mSrvrBound}.
\end{proof} 

Finally we present the next result which shows how the average model parameter value across servers, $\Bar{w}_p$ evolves with every epoch to move closer to the optimal model parameter value $w^*$.
%% ~~~~~~~~~~~~~~~  R_IR:Lemma 4  ~~~~~~~~~~~~~~~ %%
\begin{lemma} \label{lem:avg4mOptml}
    Suppose Assumptions \ref{asmpn:funcn_fij} and \ref{asmpn:gradBound} hold. Then with $\gamma < 1/(\mu T_C)$, the difference of the average estimate across all servers from the optimal value remains bounded. Specifically,  
    \begin{equation} \label{eq:avg4mOptmlBound}
        \nrm{\Bar{w}_{p} - w^*} \leq \Lambda^p \nrm{\Bar{w}_0 - w^*} + Y_0 / (1 - \Lambda)
    \end{equation}
    where $Y_0 = (\gamma T_C )^2 \theta L + (\gamma T_C )^2 \theta L \sqrt{M} \sigma_A/(1 - \sigma_A) + \gamma T_C L \nrm{W_{0} - \one \Bar{w}'_0} $, and $\Lambda = \sqrt{1 - \gamma \mu T_C}$. 
\end{lemma}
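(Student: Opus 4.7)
The plan is to derive a one-step recursion for $\nrm{\Bar{w}_{p+1} - w^*}$ that resembles a gradient descent contraction on the global $f$, then bound the error terms introduced by the client drift and the server disagreement, and finally unroll the recursion geometrically.

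First I would derive how $\Bar{w}_p$ evolves across an epoch. Since $\one' A^{T_S} = \one'$ (because $A$ is doubly stochastic), averaging the server update in \eqref{eq:srvrs_Con_p_p1} gives $\Bar{w}_{p+1} = \Bar{w}_p - \gamma \Bar{g}_p$, where $\Bar{g}_p = (1/M)\sum_i g^i_p = (1/(MN)) \sum_{i,j} \sum_{\tau=pT_E}^{pT_E+T_C-1} \nabla f^{ij}(w^{ij}_\tau)$. This turns the joint client/server scheme into a single average dynamics; the hard part is that $\Bar{g}_p$ is not $T_C \nabla f(\Bar{w}_p)$ because gradients are evaluated at the drifted local iterates $w^{ij}_\tau$ rather than at $\Bar{w}_p$.

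Next I would add and subtract $\gamma T_C \nabla f(\Bar{w}_p)$ and use $\nabla f(w^*) = 0$ to write
\begin{equation*}
\Bar{w}_{p+1} - w^* = \bigl[\Bar{w}_p - w^* - \gamma T_C(\nabla f(\Bar{w}_p) - \nabla f(w^*))\bigr] + \gamma T_C \nabla f(\Bar{w}_p) - \gamma \Bar{g}_p.
\end{equation*}
Applying Lemma \ref{lem:wvDiffSq} to the bracketed term with $\eta = \gamma T_C$ (which requires $\gamma T_C \leq 1/L$, implied in spirit by the stated $\gamma < 1/(\mu T_C)$ combined with $\mu \leq L$) contracts $\nrm{\Bar{w}_p - w^*}$ by $\Lambda = \sqrt{1 - \gamma \mu T_C}$. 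For the residual term, I would rewrite it as
\begin{equation*}
\frac{\gamma}{MN} \sum_{i,j} \sum_{\tau} \bigl(\nabla f^{ij}(\Bar{w}_p) - \nabla f^{ij}(w^{ij}_\tau)\bigr),
\end{equation*}
and apply $L$-smoothness of each $f^{ij}$ together with the triangle inequality $\nrm{\Bar{w}_p - w^{ij}_\tau} \leq \nrm{\Bar{w}_p - w^i_p} + \nrm{w^i_p - w^{ij}_\tau}$. The first piece is controlled by Lemma \ref{lem:srvrAvgBound} (bounding $\sigma_A^p \leq 1$ to match the shape of $Y_0$), and the second by Lemma \ref{lem:clnt4mSrvr}. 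Collecting constants and pulling a common factor of $\gamma T_C L$ reproduces exactly the three terms of $Y_0$.

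This yields a one-step recursion of the form $\nrm{\Bar{w}_{p+1} - w^*} \leq \Lambda \nrm{\Bar{w}_p - w^*} + Y_0$, which I would unroll over $p$ epochs and bound the resulting geometric sum by $Y_0/(1-\Lambda)$, using $\Lambda < 1$ under the stated step-size condition. The main obstacle is step three: choosing the correct perturbation $\gamma T_C \nabla f(\Bar{w}_p)$ so that Lemma \ref{lem:wvDiffSq} applies with parameter $\eta = \gamma T_C$ (rather than $\gamma$), and then verifying that the heterogeneity residual, once summed over $j$ and $\tau$, telescopes cleanly into the combination of Lemmas \ref{lem:srvrAvgBound} and \ref{lem:clnt4mSrvr} that produces precisely $Y_0$.
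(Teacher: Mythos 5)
Your proposal follows essentially the same route as the paper's proof: average the server recursion to get $\Bar{w}_{p+1} = \Bar{w}_p - \gamma \Bar{g}_p$, add and subtract $\gamma T_C \nabla f(\Bar{w}_p)$ so that Lemma \ref{lem:wvDiffSq} with $\eta = \gamma T_C$ yields the $\Lambda$-contraction, control the residual via $L$-smoothness and the triangle inequality through $w^i_p$ using Lemmas \ref{lem:clnt4mSrvr} and \ref{lem:srvrAvgBound}, and unroll geometrically. Your side remark that Lemma \ref{lem:wvDiffSq} additionally needs $\gamma T_C \leq 1/L$ is a valid observation (the lemma statement's $\gamma < 1/(\mu T_C)$ does not imply it, though the main theorem's condition $\gamma < \min\{1/(LT_C), 1/(\mu T_C)\}$ does), but otherwise the argument matches the paper's.
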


\begin{proof}
Consider any $p \in \N$. Then using \eqref{eq:srvr_aftrTc} and the fact that $\nabla f(w^*) = 0$, we can write
\begin{align}
    \nonumber \nrm{\Bar{w}_{p+1} &- w^*} = \nrm{\Bar{w}_p - \gamma \frac{1}{M} \sum_{i=1}^M g^i_\tau - w^* } \\ 
        \nonumber &\leq \nrm{\Bar{w}_p - w^* - \gamma T_C (\nabla f(\Bar{w}_p) - \nabla f(w^*)) } \\ 
        \nonumber &+ \gamma \sum_{MNp} \nrm{\nabla f^{ij}(\Bar{w}_p) - \nabla f^{ij}(w^{ij}_{\tau})} 
\end{align} 
where $\sum_{MNp} := \frac{1}{M} \sum_{i=1}^M \frac{1}{N} \sum_{j=1}^N \sum_{\tau = p T_E}^{pT_E+T_C-1}$. 

\noindent Now using $L$-smoothness of $f^{ij}(.)$ from Assumption \ref{asmpn:funcn_fij} and the result \eqref{eq:wvDiff} from Lemma \ref{lem:wvDiffSq} with $\Lambda := \sqrt{1 - \gamma \mu T_C}$, we get 
\begin{align}
    \nonumber \nrm{\Bar{w}_{p+1} &- w^*} \leq \Lambda \nrm{\Bar{w}_p - w^*} + \gamma \sum_{MNp} L \nrm{w^{ij}_{\tau} - \Bar{w}_{p}} \\ 
    \nonumber &\leq \Lambda \nrm{\Bar{w}_p - w^*} + \gamma L \sum_{MNp} (\nrm{w^{ij}_{\tau} - w^i_p} + \nrm{w^i_p - \Bar{w}_{p}}) \\ 
    \nonumber &\overset{(a)}{\leq} \Lambda \nrm{\Bar{w}_p - w^*} + \gamma L \sum_{MNp} \gamma T_C \theta \\ 
    \nonumber & \hspace{0.3cm} + \gamma L \sum_{MNp} ( \sigma_A^p \nrm{W_{0} - \one \Bar{w}'_0} + \sqrt{M} T_C \theta \gamma \sum_{l=0}^{p-1} \sigma_A^{p-l}) \\
    \nonumber &\overset{(b)}{\leq} \Lambda \nrm{\Bar{w}_p - w^*} + (\gamma T_C)^2 \theta L + \gamma T_C L \sigma_A^p \nrm{W_{0} - \one \Bar{w}'_0} \\ 
    \label{eq:lem4_1} &\hspace{0.3cm} + (\gamma T_C)^2  \theta L \sqrt{M} \sigma_A / (1 - \sigma_A)  
\end{align}
where $(a)$ follows from \eqref{eq:clnt4mSrvrBound} in Lemma \ref{lem:clnt4mSrvr} and \eqref{eq:srvr4mAvgBound} in Lemma \ref{lem:srvrAvgBound}, and $(b)$ follows from the fact that $\sigma_A < 1$. 

\noindent Let $Y_t := (\gamma T_C)^2 \theta L ( 1 + \sqrt{M} \sigma_A / (1 - \sigma_A)) + \gamma T_C L \delta_0 \sigma_A^t  $, where $\delta_0 = \nrm{W_{0} - \one \Bar{w}'_0}$. Then from \eqref{eq:lem4_1} we can write :
\begin{equation} \label{eq:lem4_2}
    \nrm{\Bar{w}_{p} - w^*} \leq \Lambda^p \nrm{\Bar{w}_0 - w^*} + \sum_{l=0}^{p-1} \Lambda^{p-l} Y_l
\end{equation} 
With $\gamma < 1/(\mu T_C)$ we have $\Lambda < 1$. Using this and the fact that $\sigma_A < 1$ in \eqref{eq:lem4_2} we have : 
\begin{align} \label{eq:lem4_3}
    \nonumber \nrm{\Bar{w}_{p} - w^*} &\leq \Lambda^p \nrm{\Bar{w}_0 - w^*} + Y_0 \sum_{l=0}^{p-1} \Lambda^{p-l} \\ 
    \nonumber  &\leq  \Lambda^p \nrm{\Bar{w}_0 - w^*} + Y_0 / (1 - \Lambda)
\end{align}
\end{proof} 

%% ~~~~~~~~~~~~~~~  R:Main Result  ~~~~~~~~~~~~~~~ %%
\subsection{Main Result} \label{sec:Res_MnRes}
Here we present our main result on distributed federated learning using the proposed DFL algorithm in the following theorem. 
\begin{theorem} \label{thm:mainThm}
    Consider a distributed federated learning system  with $M$ servers, $N$ clients per server, where the communication among the servers is represented by graph $\mathcal{G}$. Suppose Assumptions \ref{asmpn:grphCnnctd}, \ref{asmpn:funcn_fij} and \ref{asmpn:gradBound} hold.
    The DFL algorithm in Algorithm \ref{alg:dfl}, with the step size $\gamma < \min\{1/LT_C, 1/\mu T_C\}$,  ensures that the prediction model across all the servers is within a tolerance value $\epsilon$ from the ideal model. Specifically, for all  $i \in \mathcal{S}$, 
    \begin{equation}
        \lim_{p \rightarrow \infty} \nrm{w^i_p - w^*} \leq \epsilon
    \end{equation} 
    where $\epsilon =  \sqrt{M} \gamma \theta T_C \sigma_A / (1-\sigma_A) + Y_0 / (1 - \Lambda)$, with $Y_0, \Lambda$ and $\sigma_A$ as in \eqref{eq:avg4mOptmlBound}. 
\end{theorem}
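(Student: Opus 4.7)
The plan is to combine Lemma \ref{lem:srvrAvgBound} with Lemma \ref{lem:avg4mOptml} through the triangle inequality, using the server-average $\Bar{w}_p$ as the intermediate point. Specifically, for any server $i \in \srvr$ and any epoch $p \in \N$, I would write
\begin{equation*}
\nrm{w^i_p - w^*} \leq \nrm{w^i_p - \Bar{w}_p} + \nrm{\Bar{w}_p - w^*},
\end{equation*}
decomposing the quantity of interest into a consensus-error term (bounded by Lemma \ref{lem:srvrAvgBound}) and an optimization-error term (bounded by Lemma \ref{lem:avg4mOptml}).

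Before invoking these lemmas I would verify that the step-size hypothesis $\gamma < \min\{1/(LT_C),\,1/(\mu T_C)\}$ supplies what each lemma needs. The constraint $\gamma < 1/(\mu T_C)$ guarantees $\Lambda = \sqrt{1 - \gamma \mu T_C} < 1$, as required by Lemma \ref{lem:avg4mOptml}. The constraint $\gamma < 1/(LT_C)$ is precisely what lets Lemma \ref{lem:wvDiffSq} be applied inside the proof of Lemma \ref{lem:avg4mOptml} with effective step $\eta = \gamma T_C \leq 1/L$. Assumption \ref{asmpn:grphCnnctd} together with the doubly-stochastic, positive-diagonal structure of $A$ from \eqref{eq:aij_wtsProp} gives $\sigma_A < 1$, which both lemmas also rely on.

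Substituting the two lemma bounds would yield
\begin{align*}
\nrm{w^i_p - w^*} &\leq \sigma_A^p \nrm{W_0 - \one \Bar{w}'_0} + \sqrt{M}\, T_C \theta \gamma \sigma_A/(1-\sigma_A) \\
&\quad + \Lambda^p \nrm{\Bar{w}_0 - w^*} + Y_0/(1-\Lambda).
\end{align*}
Since $\sigma_A < 1$ and $\Lambda < 1$, both $\sigma_A^p$ and $\Lambda^p$ decay to zero, so the two transient terms vanish as $p \to \infty$, leaving exactly $\epsilon = \sqrt{M}\gamma\theta T_C \sigma_A/(1-\sigma_A) + Y_0/(1-\Lambda)$. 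The bound is independent of $i$, so it holds uniformly for every $i \in \srvr$.

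The main obstacle is not at this final assembly, which is really just a triangle inequality followed by taking a limit of two geometric sequences. The genuine work has already been done in the intermediate lemmas, where one must establish the per-epoch geometric contraction of the consensus error at rate $\sigma_A$ and of the optimization error at rate $\Lambda$, while carefully controlling the within-epoch client drift through Lemma \ref{lem:clnt4mSrvr}. The only care required at the theorem level is to verify that the two separate step-size requirements demanded by the different lemmas combine cleanly into the single $\min\{1/(LT_C),\,1/(\mu T_C)\}$ condition stated, which they do.
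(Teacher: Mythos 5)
Your proposal matches the paper's proof exactly: the same triangle-inequality decomposition through $\Bar{w}_p$, the same invocation of Lemmas \ref{lem:srvrAvgBound} and \ref{lem:avg4mOptml}, and the same limit argument using $\sigma_A < 1$ and $\Lambda < 1$. The additional check that the combined step-size condition feeds each lemma correctly is sound and consistent with the paper.
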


\begin{proof}
    For any server $i \in \srvr$ and epoch $p \in \N$ we have 
    \begin{equation} \label{eq:thm_1}
        \nrm{w^i_p - w^*} \leq \nrm{w^i_p - \Bar{w}_p} + \nrm{\Bar{w}_p - w^*}
    \end{equation}
    Using the results \eqref{eq:srvr4mAvgBound} from Lemma \ref{lem:srvrAvgBound} and \eqref{eq:avg4mOptmlBound} from Lemma \ref{lem:avg4mOptml} in \eqref{eq:thm_1} we get : 
    \begin{align}
        \nonumber \nrm{w^i_p - w^*} &\leq \sigma_A^p \nrm{W_{0} - \one \Bar{w}'_0} + \sqrt{M} T_C \theta \gamma \sigma_A / (1 - \sigma_A) \\ 
        \label{eq:thm_2} & \hspace{0.3cm} + \Lambda^p \nrm{\Bar{w}_0 - w^*} + Y_0 / (1 - \Lambda)
    \end{align} 
    Now using the fact that $\sigma_A < 1$ and $\Lambda < 1$, in the limiting case of \eqref{eq:thm_2} we have 
    \begin{equation}
        \lim_{p \rightarrow \infty} \nrm{w^i_p - w^*} \leq  \sqrt{M} T_C \theta \gamma \sigma_A / (1 - \sigma_A) + Y_0 / (1 - \Lambda)
    \end{equation}
\end{proof}

%% ~~~~~~~~~~~~~~~  Numerical Simulation  ~~~~~~~~~~~~~~~ %%
\section{Numerical Simulation} \label{sec:NumRes}
We present numerical simulation results considering a data fitting problem to illustrate the effectiveness of our novel DFL algorithm. We consider a system of 5 servers with 5 clients under each server. Further each client is allotted separate sets of 100 data-points each. All these 2500 data points are generated randomly such that $w^* = (5,2)$. The servers communicate among themselves over a connected graph. Within an epoch, we consider $T_C = 250$ iterations at the client and $T_S=25$ iterations at the server. The resultant straight line plot that we get from the model parameters generated by the DFL algorithm is shown in Fig.\ref{fig:eg_dataFit}(a). Consider the model parameter values at the servers over the $T_S$ iterations at every server in every epoch. Through Fig.\ref{fig:eg_dataFit}(b) we show that how each server starts off with quite different model parameter values based on their corresponding client model aggregation. Then after around 4000 server iterations, or 160 epochs, all the servers manage to achieve consensus over a common model parameter value, and this parameter value eventually comes close to the ideal values. This shows the effectiveness of using the consensus updates among the servers given in \eqref{eq:UL_server}.

\begin{figure}
    \centering
    %\begin{subfigure}[]
         %\centering
         \includegraphics[width=0.45\columnwidth]{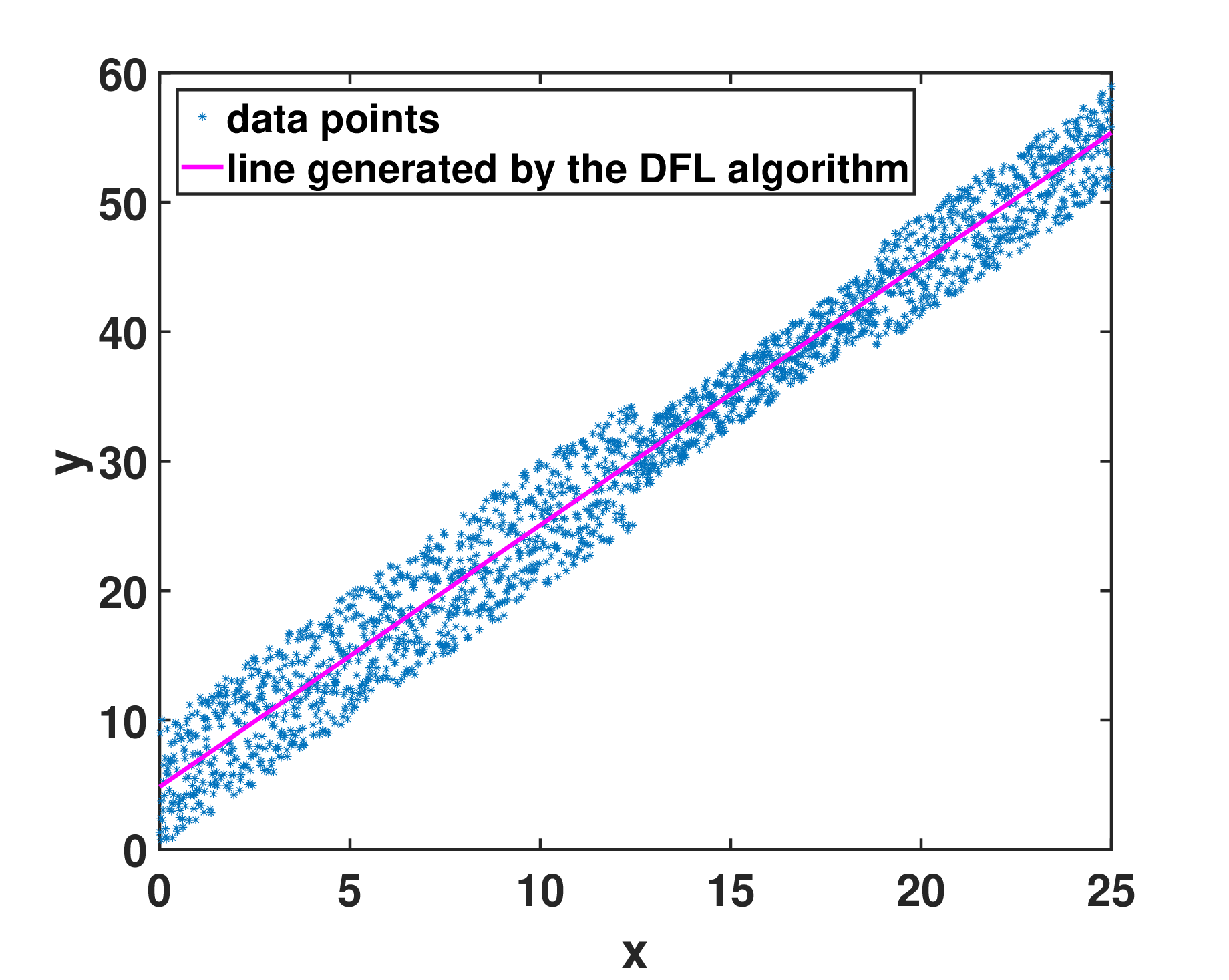}
         %\caption{}
         %\label{fig:1A_dataFit}
     %\end{subfigure}
    \hfill
   %\begin{subfigure}[]
         %\centering
         \includegraphics[width=0.45\columnwidth]{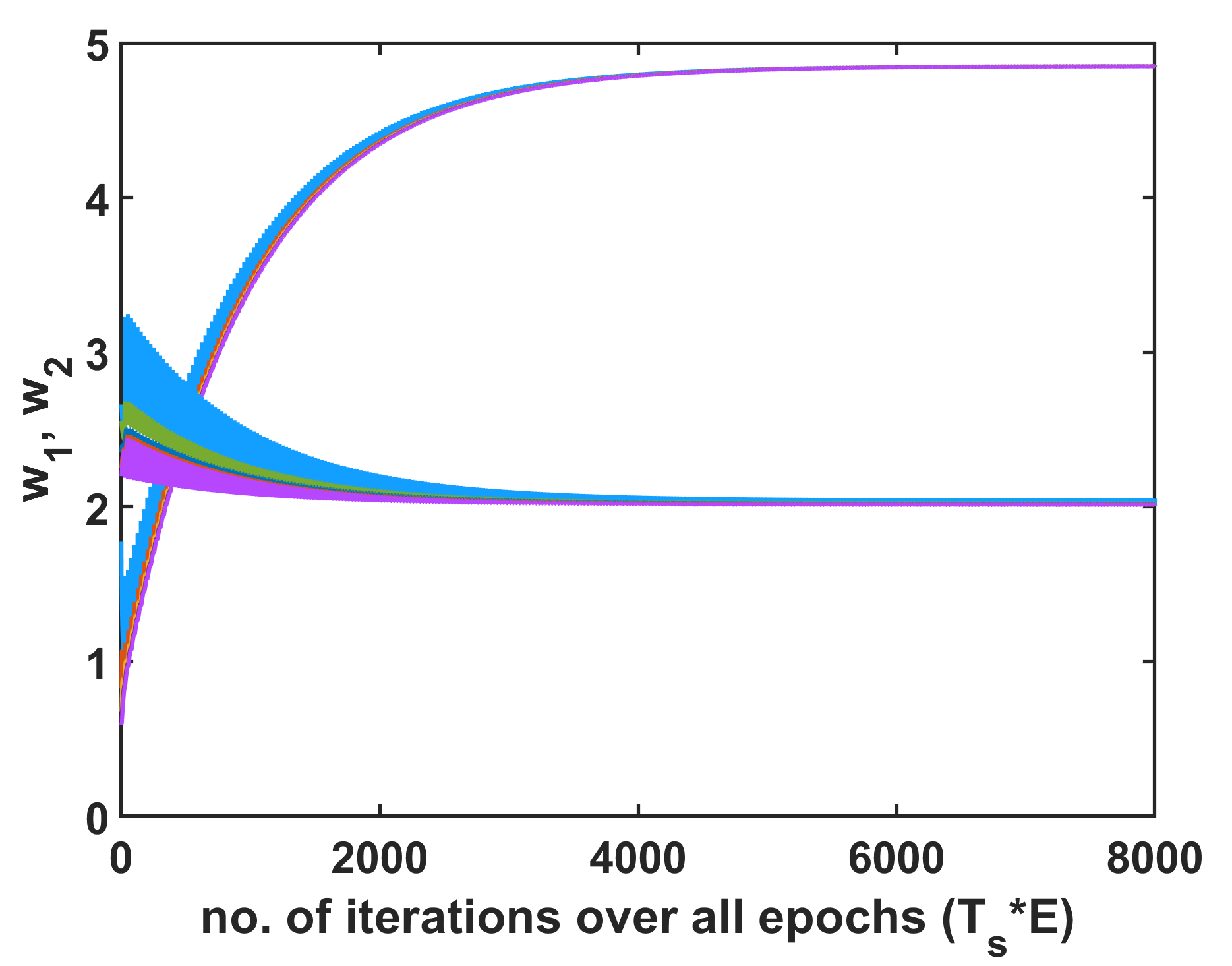} 
         %\caption{}
         %\label{fig:1B_srvrConsensus}
     %\end{subfigure}
    \caption{The DFL algorithm (a) generates a best-fitting straight line for the data across all clients, and (b) manages to get all the servers to achieve consensus.}
    \label{fig:eg_dataFit}
\end{figure}

%% ~~~~~~~~~~~~~~~  Conclusion  ~~~~~~~~~~~~~~~ %%
\section{Conclusion} \label{sec:Con}
In this paper we introduced a novel distributed federated learning system using multiple servers with a group of clients linked to each server. It addresses the challenges associated with having a single central server in the commonly used federated learning systems. In the proposed system with multiple servers, each server can communicate with its neighbouring servers, alongside communicating with its clients. A novel DFL algorithm is proposed which generates a common model parameter across servers trained on the data available across all clients. The DFL algorithm ensures that the sensitive user data remains with the clients and is not required to be shared with the server, remaining true to the main focus of federated learning algorithms of preserving user privacy. We established that under certain choice of parameters the proposed algorithm ensures that all the servers converge to a model value within a small tolerance from the ideal model. Finally we illustrated our result through a numerical simulation. 
As future work we would address communication challenges for this framework as addressed in the distributed optimization literature.
\balance

%\printbibliography

\bibliographystyle{IEEEtran} % Specifies the IEEE style
\bibliography{dfl_ref.bib} 

\vfill

\end{document}